\documentclass{article}

\usepackage[preprint]{neurips_2025}
\usepackage{amsthm}
\usepackage{amsmath}
\usepackage{graphicx}

\makeatletter
\def\@noticestring{}
\makeatother

\newtheorem{theorem}{Theorem}
\newtheorem{lemma}{Lemma}
\newtheorem{remark}{Remark}
\newtheorem{definition}{Definition}
\newtheorem{example}{Example}
\usepackage{algorithmicx}
\usepackage{algorithm}
\usepackage{algpseudocode}

\usepackage[utf8]{inputenc} 
\usepackage[T1]{fontenc}    
\usepackage{hyperref}       
\usepackage{url}            
\usepackage{booktabs}       
\usepackage{amsfonts}       
\usepackage{nicefrac}       

\usepackage{microtype}      
\usepackage{xcolor}         

\title{Riemannian Optimization in Modular Systems}

\author{
  Christian Pehle\\
  Cold Spring Harbor Laboratory\\
  Cold Spring Harbor, NY 11724 \\
  \texttt{pehle@cshl.edu} \\
  \And
  Jean-Jacques Slotine \\
  Massachusetts Institute of Technology\\ 
  Cambridge, MA 02139\\
  \texttt{jjs@mit.edu}
}

\begin{document}

\maketitle

\begin{abstract}
Understanding how systems built out of modular components can be jointly optimized is an important problem in biology, engineering, and machine learning. The backpropagation algorithm is one such solution and has been instrumental in the success of neural networks. Despite its empirical success, a strong theoretical understanding of it is lacking. Here, we combine tools from Riemannian geometry, optimal control theory, and theoretical physics to advance this understanding. 

We make three key contributions: First, we revisit the derivation of backpropagation as a constrained optimization problem and combine it with the insight that Riemannian gradient descent trajectories can be understood as the minimum of an action. Second, we introduce a recursively defined layerwise Riemannian metric that exploits the modular structure of neural networks and can be efficiently computed using the Woodbury matrix identity, avoiding the $O(n^3)$ cost of full metric inversion. Third, we develop a framework of composable ``Riemannian modules'' whose convergence properties can be quantified using nonlinear contraction theory, providing algorithmic stability guarantees of order $O(\kappa^2 L/(\xi \mu \sqrt{n}))$ where $\kappa$ and $L$ are Lipschitz constants, $\mu$ is the mass matrix scale, and $\xi$ bounds the condition number.

Our layerwise metric approach provides a practical alternative to natural gradient descent. While we focus here on studying neural networks, our approach more generally applies to the study of systems made of modules that are optimized over time, as it occurs in biology during both evolution and development.
\end{abstract}

\section{Introduction}

\emph{Modularity} is the hallmark of many complex systems. It can be found in biological organisms, engineered machines, and machine learning models. In these systems, individual components -- or modules -- have parameters that can be optimized independently, yet their interactions must be taken into account to achieve overarching system goals. Within machine learning, neural networks serve as a prime example of modular systems, where layers function as modules with parameters that are collectively tuned to minimize a loss function. The backpropagation algorithm has been a cornerstone of this process, driving the remarkable success of neural networks. Despite its empirical success, a firm theoretical understanding is lacking.

This paper addresses this gap by synthesizing insights from Riemannian geometry, optimal control theory, and theoretical physics to advance our understanding of optimization in modular systems. We reframe the optimization of such systems as a constrained problem on a Riemannian manifold, where the manifold's geometry mirrors the modular structure. This revisits the formulation of backpropagation as a constrained optimization problem \cite{lecun1988theoretical} originally formulated in the context of optimal control theory \cite{bryson1969applied} and connects it to the observation that gradient descent paths can be understood as minima of an action \cite{witten1982supersymmetry}.

This approach not only recovers backpropagation as a special case but also equips us with novel tools to analyze and enhance optimization strategies in modular systems.

\section{Action Principle for Gradient Descent}

\paragraph{Intuition.} The key insight connecting physics and optimization is that gradient descent trajectories can be understood as paths that minimize a certain ``action'' - a concept from field theory. Unlike classical mechanics where action balances kinetic and potential energy, here both terms in the action are quadratic forms with respect to the Riemannian metric: the first term penalizes rapid parameter changes (measured by the metric $g_{IJ}$), while the second term penalizes large gradients (measured by the inverse metric $g^{IJ}$). This formulation, inspired by Witten's supersymmetric quantum mechanics \cite{witten1982supersymmetry}, reveals that gradient descent paths are those that optimally balance parameter velocity against gradient magnitude, both weighted by the geometry of the parameter space.

Gradient flow can be seen as the critical point of an action. Given a Riemannian manifold $(M,g)$ with coordinate functions $\phi^I$ (where $I,J = 1,\ldots,n$ index coordinates) and a smooth map $h \colon M \to \mathbf{R}$, the action is given by
\begin{equation}
\label{eq:action-gradient-descent}
S = \frac{1}{2} \int ds \left(g_{IJ} \frac{d \phi^I}{ds} \frac{d \phi^J}{ds} + \eta^2 g^{IJ} \frac{\partial h}{\partial \phi^I} \frac{\partial h}{\partial \phi^J}\right)
\end{equation}
where $g_{IJ}$ are the metric components, $g^{IJ} = (g^{-1})^{IJ}$ are the inverse metric components, and Einstein summation convention is used.
This is easy to see by completing the square and integrating by parts:
\begin{equation*}
S = \frac{1}{2} \int ds \ g_{IJ} \left(\frac{d \phi^I}{ds} \pm \eta g^{IK} \frac{\partial h}{\partial \phi^K}\right) \left(\frac{d \phi^J}{ds} \pm \eta g^{JL} \frac{\partial h}{\partial \phi^L}\right) \mp \eta \int ds \frac{dh}{ds}
\end{equation*}
For any trajectory, this action is bounded from below by
\begin{equation*}
S \geq \eta \vert  h(s=+\infty) - h(s=-\infty) \vert
\end{equation*}
and equality is reached iff
\begin{equation*}
\frac{d \phi^I}{ds} \pm \eta g^{IJ} \frac{\partial h}{\partial \phi^J} = 0.
\end{equation*}
These are precisely the equations of Riemannian gradient descent and ascent. We will now compute some immediate properties of this Lagrangian and then turn to how we regard neural network training from this perspective.

\begin{theorem}The Hamiltonian $H$ associated with the Lagrange density
\begin{equation*}
L = \frac{1}{2} g_{IJ} \frac{d \phi^I}{ds} \frac{d \phi^J}{ds} + \frac{1}{2}\eta^2 g^{IJ} \partial_I h \partial_J h
\end{equation*}
vanishes along gradient descent and ascent paths.
\end{theorem}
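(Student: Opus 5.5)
We carry out the Legendre transform explicitly and then substitute the first-order gradient flow equations derived above. The Lagrangian $L$ has the form ``kinetic minus potential'' with kinetic term $T = \frac{1}{2} g_{IJ}\dot\phi^I\dot\phi^J$ (writing $\dot\phi^I = d\phi^I/ds$) and potential $V = -\frac{1}{2}\eta^2 g^{IJ}\partial_I h\,\partial_J h$. The sign of $V$ is the crucial feature: the gradient term enters $L$ with a plus sign, so it appears with a minus sign in the Hamiltonian.

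First we compute the canonical momenta $p_I = \partial L/\partial \dot\phi^I = g_{IJ}\dot\phi^J$, using the symmetry of $g_{IJ}$. Since $g$ is nondegenerate, this relation inverts to $\dot\phi^I = g^{IJ}p_J$. The Hamiltonian is then
\begin{equation*}
H = p_I\dot\phi^I - L = \frac{1}{2} g_{IJ}\dot\phi^I\dot\phi^J - \frac{1}{2}\eta^2 g^{IJ}\partial_I h\,\partial_J h = \frac{1}{2} g^{IJ}p_Ip_J - \frac{1}{2}\eta^2 g^{IJ}\partial_I h\,\partial_J h .
\end{equation*}

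Next we evaluate $H$ on a solution of $\dot\phi^I = \mp\eta g^{IJ}\partial_J h$, which are the equality conditions obtained from completing the square. Substituting gives
\begin{equation*}
g_{IJ}\dot\phi^I\dot\phi^J = \eta^2 g_{IJ}g^{IK}g^{JL}\partial_K h\,\partial_L h = \eta^2 g^{KL}\partial_K h\,\partial_L h .
\end{equation*}
Here we use $g_{IJ}g^{IK} = \delta_J^K$, and the sign choice drops out because it appears squared. The two terms of $H$ therefore cancel identically, so $H=0$ pointwise along both descent and ascent paths.

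As a consistency check, we note that $L$ has no explicit $s$-dependence, so $H$ is conserved along any solution of the Euler--Lagrange equations. The gradient flow paths are minimizers of $S$ by the bound $S \geq \eta|\Delta h|$, and hence are critical points, so they lie on the zero level set of this conserved quantity. There is no real obstacle here. The only step needing care is the sign bookkeeping in the Legendre transform, where one must confirm that the $\eta^2$ term flips sign relative to $L$; without that sign flip the two terms would add rather than cancel.
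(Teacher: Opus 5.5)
Your proof is correct and follows the paper's route: the same Legendre transform to $H = \frac{1}{2} g^{IJ}p_Ip_J - \frac{1}{2}\eta^2 g^{IJ}\partial_I h\,\partial_J h$. The paper then factors this as $\frac{1}{2} g^{IJ}(p_I-\eta\partial_I h)(p_J+\eta\partial_J h)$ and observes that one factor vanishes when $p_I = \mp\eta\,\partial_I h$, whereas you substitute the flow equation directly — an equivalent final step. Your closing ``consistency check'' is unnecessary, and conservation of $H$ along Euler--Lagrange solutions does not by itself force $H=0$; the zero value still comes from the direct computation.
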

\begin{proof}
The canonical momentum associated with $L$ is
\begin{equation*}
p_I = \frac{\partial L}{\partial \dot{\phi}^I} = g_{IJ} \dot{\phi}^J
\end{equation*}
The Hamiltonian is then
\begin{equation*}
H = p_I \dot{\phi}^I - L = \frac{1}{2} g^{IJ} p_I p_J - \frac{1}{2} \eta^2 g^{IJ} \partial_I h \partial_J h = \frac{1}{2} g^{IJ}(p_I - \eta \partial_I h)(p_J + \eta \partial_J h)
\end{equation*}
This shows that on gradient descent and ascent paths the Hamiltonian vanishes.
\end{proof}

\begin{theorem}
The Hamilton equations on $T^* M$ for the Hamiltonian $H$ are
\begin{align*}
\dot{\phi}^I &= \frac{\partial H}{\partial p_I} = g^{IJ} p_J\\
\dot{p}_I &= - \frac{\partial H}{\partial \phi^I} = -\frac{1}{2} g^{KL}_{,I} p_K p_L + \frac{1}{2} \eta^2 g^{KL}_{,I} \partial_K h \partial_L h + \eta^2 g^{KL} (\partial_I \partial_K h) \partial_L h
\end{align*}
with $g^{KL}_{,I} = \partial_I g^{KL}$.
\end{theorem}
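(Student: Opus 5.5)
The plan is to start from the explicit form of the Hamiltonian obtained in the proof of the previous theorem,
\begin{equation*}
H(\phi,p) = \frac{1}{2} g^{IJ}(\phi) p_I p_J - \frac{1}{2}\eta^2 g^{IJ}(\phi)\, \partial_I h(\phi)\, \partial_J h(\phi),
\end{equation*}
and regard $(\phi^I, p_I)$ as independent canonical coordinates on $T^*M$. Hamilton's equations $\dot\phi^I = \partial H/\partial p_I$ and $\dot p_I = -\partial H/\partial \phi^I$ then reduce the statement to two partial derivatives. Note that this $H$ is the Legendre transform of $L$. The potential-like term $\frac{1}{2}\eta^2 g^{IJ}\partial_I h\partial_J h$ enters $L$ with a plus sign and therefore enters $H$ with a minus sign.

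For the first equation, only the kinetic term depends on $p$. Using the symmetry $g^{IJ}=g^{JI}$ we get
\begin{equation*}
\frac{\partial}{\partial p_I}\Big(\tfrac12 g^{KL}p_Kp_L\Big) = \tfrac12 (g^{IL}p_L + g^{KI}p_K) = g^{IJ}p_J .
\end{equation*}
This is consistent with the Legendre relation $p_I = g_{IJ}\dot\phi^J$ from the previous proof.

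For the second equation, we differentiate $H$ in $\phi^I$ at fixed $p$.
\begin{itemize}
\item The kinetic term depends on $\phi$ only through the metric, giving $\tfrac12 g^{KL}_{,I}p_Kp_L$.
\item The gradient term is a product of $g^{KL}$, $\partial_K h$ and $\partial_L h$, so the product rule yields three pieces: $\tfrac12\eta^2 g^{KL}_{,I}\partial_Kh\partial_Lh$, plus two terms $\tfrac12\eta^2 g^{KL}(\partial_I\partial_K h)\partial_L h$ and $\tfrac12\eta^2 g^{KL}\partial_K h(\partial_I\partial_L h)$.
\item By symmetry of $g^{KL}$ and relabeling of dummy indices, these last two terms coincide and combine to $\eta^2 g^{KL}(\partial_I\partial_K h)\partial_L h$.
\end{itemize}
Applying the overall minus sign from $\dot p_I = -\partial H/\partial\phi^I$, together with the minus sign in front of the gradient term of $H$, gives exactly the stated expression.

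The main obstacle is only sign and index bookkeeping. One must be careful that the gradient term carries a minus in $H$, so it appears with a plus in $\dot p_I$, and that the symmetric-metric relabeling is justified. If desired, one could also note the derivative of the inverse metric, $g^{KL}_{,I} = -g^{KA}g^{LB}\partial_I g_{AB}$, but it is not needed since the statement is phrased in terms of $g^{KL}_{,I}$.
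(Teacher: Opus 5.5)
Your proposal is correct and is essentially the argument the paper intends. The paper states this theorem without a written proof because it follows by directly differentiating the Hamiltonian $H = \frac{1}{2} g^{IJ} p_I p_J - \frac{1}{2}\eta^2 g^{IJ}\partial_I h\,\partial_J h$ from the preceding proof, exactly as you do, and your sign bookkeeping and the symmetric relabeling that merges the two Hessian terms into $\eta^2 g^{KL}(\partial_I\partial_K h)\partial_L h$ are both right.
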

\begin{remark}In the limit $\eta \to 0$ these define the co-geodesic flow. Similarly the Euler-Lagrange equations for the Lagrangian $L$ in the limit $\eta \to 0$ are the geodesic flow.
\end{remark}

The gradient trajectories in a Neural Network can be understood from this perspective by explicitly introducing constraints that correspond to the decomposition of $h$ into a sequence of function applications.

\subsection{Neural Network Case}

To apply the action formulation to the case of a neural network, we can consider the output of the network to be defined by a function $h$. The manifold $M$ is then the space of the network parameters.

The empirical risk minimization problem for a dataset $D = \{(x_i, y_i)\}$ is given by
\begin{equation*}
\mathrm{min}_\phi h(\phi) = \mathrm{min}_\phi \ \frac{1}{n} \sum_i l(f(x_i;\phi),y_i)
\end{equation*}
where $l$ is the loss function and $f$ is the neural network.

In the cases in which we are interested, the function $h$ is the result of the composition of several parametrized functions (a neural network). Calculating the derivatives $\frac{\partial h}{\partial \phi^I}$ directly becomes costly. We can take advantage of the fact that $h$ is the composition of functions by introducing explicit constraints \cite{lecun1988theoretical}, see also \cite{bryson1969applied} (Section 2.1 and 2.2). To do so we can introduce the Lagrangian $L$, writing the equation $h = l(f(x;\phi),y)$ as a series of constraints:
\begin{equation*}
L = l(z^{(l)}, y) - \sum_\alpha \lambda^\top_\alpha (z^{(\alpha)} - f^{(\alpha)}(z^{(\alpha-1)}, \phi_\alpha)).
\end{equation*}
A short calculation then reveals that the derivative of $h$ with respect to the parameters can be expressed in terms of the derivatives of the functions $f^{(\alpha)}$ and the Lagrange multipliers $\lambda_\alpha$:
\begin{equation*}
\frac{\partial h}{\partial \phi^I} = \frac{\partial l(f(x;\phi),y)}{\partial \phi^I} = \left(\frac{\partial f^{(\alpha)}(x;\phi)}{\partial \phi^I}\right)^\top \lambda_\alpha.
\end{equation*}
For simplicity, we have assumed that $\phi^I$ occurs only in $f^{(\alpha)}$, that is, $(\alpha = \mathrm{layer}(I))$. This is the pullback of the cotangent vector $\lambda_\alpha$ (in the activity space of layer $\alpha$) to the weight space of that layer.
Inserting in the action \eqref{eq:action-gradient-descent}, this results in
\begin{equation*}
S = \frac{1}{2} \int ds \left(g_{IJ} \frac{d \phi^I}{ds} \frac{d \phi^J}{ds} + \sum_{\alpha \beta} \eta^2 g^{IJ} \left[ \left(\frac{\partial f^{(\alpha)}(x;\phi)}{\partial \phi^I}\right)^\top \lambda_\alpha \right] \left[ \left(\frac{\partial f^{(\beta)}(x;\phi)}{\partial \phi^J}\right)^\top \lambda_\beta \right]\right)
\end{equation*}
By the same argument as above we then get that the critical points are
\begin{equation*}
\frac{d \phi^I}{ds} \pm \eta g^{IJ} \left[ \left(\frac{\partial f^{(\alpha)} (x;\phi)}{\partial \phi^J}\right)^\top \lambda_\alpha \right]= 0
\end{equation*}

We have derived the gradient descent update as a critical point of an action principle. This naturally raises the question of what the metric $g$ should be. We consider a layerwise metric that is induced by the network architecture itself. In this way, the optimization trajectories can take into account the intrinsic geometry induced by the layered structure.

\begin{remark}{Brayton-Moser equations}
A similar analysis can be carried out for the Brayton-Moser equations. The Brayton-Moser equations describe the dynamics of a system in terms of a potential function $P(\phi, \lambda)$, where $\phi$ represents state variables and $\lambda$ represents co-state or control variables. The system is equipped with two positive definite metrics:
\begin{itemize}
    \item $G_\phi$ a metric on the manifold of state variables $\phi$
    \item $G_\lambda$ a metric on the manifold of co-state variables $\lambda$
\end{itemize}
The action functional \eqref{eq:action-gradient-descent} can be defined analogously.
\end{remark}

\begin{remark}{Differential Algebraic Equations}
The results above can also be extended to the case of a system of differential algebraic equations on some manifold. This is closely related to their control theoretic origin. In this case time $t$ and pseudo-time $s$ both enter the action.
\end{remark}

\section{Riemannian Metrics on Neural Networks}

A priori there is not a unique choice for a Riemannian structure for a given neural network. Here we focus on metrics which are \emph{modular}, that is they are definable on parts of the network and the behaviour of a system composed of those modules can be analysed.

\subsection{Pullback Metric Geometry for Neural Networks}
We will consider a layerwise metric which is induced by the network architecture itself. A given neural network then has a naturally defined Riemannian geometry associated with it, due to its layered structure. In particular, a notion of distance on the output space of the network induces a notion of distance on each of the intermediate layers and weights.

This metric is defined on each layer by pulling back a metric from the output space. The metric is not positive definite, in general, but it is a natural choice to define the notion of distance between points in the space of activations and weights. Intuitively it relates measurements of angles and directions in the output space to angles and directions in all of the intermediate activation and parameter spaces of the network.

We will also assume that on each parameter space of the network there is a Riemannian metric defined on the parameter space. The overall Riemannian metric on a given layer is then the sum of the pullback metric and the layerwise metric. This is in contrast to \cite{benfenati2022singular}, who considered just the pullback metric, or to natural gradient descent, which considers the Fisher metric on the overall parameter space.

\begin{definition}[Pullback Metric]Let $f \colon M \to N$ be a smooth map
between two manifolds $M$ and $N$ with metrics $g_M$ and $g_N$. The pullback metric $f^\ast g_N$ on $M$ is defined by the relation
\begin{equation*}
f^\ast g_N(X,Y) = g_N(df(X), df(Y))
\end{equation*}
for all vector fields $X$ and $Y$ on $M$.
\end{definition}

To link this with the formulation in the previous section, we partition the parameters of the network into layers. This is reflected in a partition of the coordinate indices $I, J$. We then have that the metric $g_{IJ}$ has a block diagonal structure $g = \oplus_\alpha g^{(\alpha)}$, with each block metric $g^{(\alpha)}$ defined on the parameter space of layer $\alpha$. In matrix form this is written as
\begin{equation*}
\mathbf{G} = \bigoplus_\alpha \mathbf{G}^{(\alpha)}
\end{equation*}
where $\mathbf{G}^{(\alpha)}$ is the matrix representation of the metric $g^{(\alpha)}$ in the parameter space of layer $\alpha$. So the matrix $\mathbf{G}$ is block diagonal with blocks $\mathbf{G}^{(\alpha)}$. Each layer metric combines the pullback metric from the output space and a layer-specific parameter metric:
\begin{equation*}
\mathbf{G}^{(\alpha)} = {\mathbf{J}^{(\alpha)}}^\top \mathbf{M} \mathbf{J}^{(\alpha)} + \mathbf{D}^{(\alpha)}
\end{equation*}
where $\mathbf{J}^{(\alpha)} = \frac{\partial y}{\partial w^{(\alpha)}}$ is the Jacobian of the network output with respect to the parameters of layer $\alpha$, $\mathbf{M}$ is the positive definite metric on the output space, and $\mathbf{D}^{(\alpha)}$ is a layer-specific parameter metric (typically diagonal).

\begin{remark}
\label{rem:cholesky}
Using a Cholesky factorisation of $\mathbf{M} = \mathbf{L}^\top \mathbf{L}$ (which exists because $\mathbf{M}$ is positive definite and symmetric), and defining $\mathbf{L}^{(\alpha)} = \mathbf{L} \mathbf{J}^{(\alpha)}$, it becomes evident that the metric at layer $\alpha$ can be written as
\begin{equation*}
\mathbf{G}^{(\alpha)} = {\mathbf{L}^{(\alpha)}}^\top \mathbf{L}^{(\alpha)} + \mathbf{D}^{(\alpha)}.
\end{equation*}
This implies that to recursively track the metric through layers it is sufficient to track the transformation of one of the Cholesky factors. It should be noted that this factorization is only well defined once we have made a choice of ordered basis. This will be the case for submanifolds of a real vector space $\mathbf{R}^n$ with a given ordering.
\end{remark}

This motivates the introduction of a \emph{Riemannian module}, which holds all the necessary information to perform Riemannian gradient descent according to metrics of the kind we just described.

\subsection{Riemannian Modules}

We can also describe the same metric using per-layer-defined data and axioms on how these layers are composed. This is inspired by \cite{large2024scalable}.

\begin{definition}[Riemannian Module]A Riemannian Module is specified by
\begin{itemize}
    \item an input manifold $I$
    \item a parameter manifold $W$ with Riemannian metric $g_W$
    \item an output manifold $O$ with Riemannian metric $g_O$
    \item a smooth map $f \in C^\infty(I \times W, O)$
\end{itemize}
\end{definition}

\begin{definition}[Sequential Composition of Modules]Two Riemannian Modules $M_1$ and $M_2$ can be sequentially composed if their input and output manifolds match, i.e. $I_2 = O_1$ and 
\begin{equation*}
f_2^* g_{O_2} = g_{O_1}
\end{equation*}
The resulting Riemannian module has the obvious definition. 
\end{definition}

\begin{definition}[Parallel Composition of Modules]Two Riemannian Modules $M_1$ and $M_2$ can be composed in parallel using the cartesian product on the components.
\end{definition}

\begin{example}[Linear Layer] Let $I = \mathbf{R^n}, O = \mathbf{R}^m$ and $P = \mathrm{Lin}(\mathbf{R^n},\mathbf{R}^m)$. Let $x \in \mathbf{R}^n$, $y \in \mathbf{R}^m$ and $\mathbf{A}$ a linear map $\mathbf{y} = \mathbf{A} \mathbf{x}$. Let $\mathbf{G}(\mathbf{y})$ be a (non-constant) positive semi-definite Riemannian metric on $\mathbf{O}$, then the pullback of $\mathbf{G}$ along $\mathbf{A}$ to the input space is $\mathbf{A}^\top \mathbf{G} \mathbf{A}$. By Remark \ref{rem:cholesky} we can instead relate $\mathbf{L}'_x = \mathbf{L}_y \mathbf{A}$. The pullback to the space of linear maps is given by
\begin{equation*}
h_\mathbf{x}(\mathbf{A})(\mathbf{B}, \mathbf{C}) = \mathbf{x}^\top \mathbf{B}^\top \mathbf{G}(\mathbf{y}) \mathbf{C} \mathbf{x} = \mathbf{x}^\top \mathbf{B}^\top \mathbf{L}^\top_y \mathbf{L}_y  \mathbf{C} \mathbf{x}
\end{equation*}
\end{example}

\begin{example}[Pointwise Nonlinearity]Let $x \in \mathbf{R}^n$, $y \in \mathbf{R}^n$ and $\sigma \colon \mathbf{R} \to \mathbf{R}$ some nonlinear activation function. Then the matrix of the Jacobian of the map
\begin{equation*}
\phi \colon \mathbf{R}^n \to \mathbf{R}^n, (x_1, \ldots, x_n) \mapsto (\sigma(x_1), \ldots, \sigma(x_n))
\end{equation*}
is the diagonal matrix $\mathbf{J} = \mathbf{diag}(\sigma'(x_1),\ldots, \sigma'(x_n))$. For a given metric $\mathbf{G}(y)$ on the output space, the metric on the input space is then $\mathbf{G}'(x) = \mathbf{J}^\top \mathbf{G}(\mathbf{y})\mathbf{J}$. 
\end{example}

\subsection{Efficient Computation of the Metric Inverse}
\label{ssec:efficient-metric-inverse}
Since in a neural network the number of parameters is often proportional to the square of the number of neurons, an arbitrary metric would be expensive to compute and store explicitly. The layerwise definition in the previous section partially addresses this problem, but the question of how to efficiently invert the metric still stands.

First we observe that by considering a layerwise metric defined as the sum of a weight space-specific metric and the pullback of the output space metric, the inverse can be computed using the Woodbury matrix identity. Using this identity we can then compute the contraction with the pulled back co-vector computed by the backpropagation formula. Assuming that the weight space specific metric is diagonal there is no need to materialize the full inverse metric in memory.

\begin{lemma}[Woodbury Matrix Identity]Let $\mathbf{A} \in \mathbf{R}^{n\times n}$ a square invertible matrix, $\mathbf{U} \in \mathbf{R}^{n \times k}$, $\mathbf{V} \in \mathbf{R}^{k \times n}$ and $\mathbf{B} = \mathbf{A} + \mathbf{U}\mathbf{V}$. If $\mathbf{I}_k + \mathbf{V} \mathbf{A}^{-1} \mathbf{U}$ is invertible, then so is $\mathbf{B}$ and
\begin{equation*}
\mathbf{B}^{-1} =  \mathbf{A}^{-1} - \mathbf{A}^{-1}\mathbf{U} \left(\mathbf{I}_k + \mathbf{V}\mathbf{A}^{-1}\mathbf{U}\right)^{-1} \mathbf{V}\mathbf{A}^{-1}.
\end{equation*}
\end{lemma}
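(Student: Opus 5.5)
The plan is to verify directly that the proposed right-hand side is a two-sided inverse of $\mathbf{B}$. Set $\mathbf{C} = \mathbf{I}_k + \mathbf{V}\mathbf{A}^{-1}\mathbf{U}$, which is invertible by hypothesis, and define
\begin{equation*}
\mathbf{X} = \mathbf{A}^{-1} - \mathbf{A}^{-1}\mathbf{U}\mathbf{C}^{-1}\mathbf{V}\mathbf{A}^{-1}.
\end{equation*}
It then suffices to show $\mathbf{B}\mathbf{X} = \mathbf{I}_n$. Since $\mathbf{B}$ and $\mathbf{X}$ are square $n\times n$ matrices, a one-sided inverse is automatically two-sided, so this also gives the invertibility of $\mathbf{B}$ claimed in the statement. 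For symmetry, one can also check $\mathbf{X}\mathbf{B} = \mathbf{I}_n$ by the mirror computation.

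The key step is the expansion of
\begin{equation*}
\mathbf{B}\mathbf{X} = (\mathbf{A} + \mathbf{U}\mathbf{V})\bigl(\mathbf{A}^{-1} - \mathbf{A}^{-1}\mathbf{U}\mathbf{C}^{-1}\mathbf{V}\mathbf{A}^{-1}\bigr).
\end{equation*}
Multiplying out gives four terms:
\begin{equation*}
\mathbf{I}_n + \mathbf{U}\mathbf{V}\mathbf{A}^{-1} - \mathbf{U}\mathbf{C}^{-1}\mathbf{V}\mathbf{A}^{-1} - \mathbf{U}\mathbf{V}\mathbf{A}^{-1}\mathbf{U}\mathbf{C}^{-1}\mathbf{V}\mathbf{A}^{-1}.
\end{equation*}
The last two terms share the left factor $\mathbf{U}$ and the right factor $\mathbf{C}^{-1}\mathbf{V}\mathbf{A}^{-1}$, so they combine into
\begin{equation*}
-\mathbf{U}\bigl(\mathbf{I}_k + \mathbf{V}\mathbf{A}^{-1}\mathbf{U}\bigr)\mathbf{C}^{-1}\mathbf{V}\mathbf{A}^{-1} = -\mathbf{U}\mathbf{C}\mathbf{C}^{-1}\mathbf{V}\mathbf{A}^{-1} = -\mathbf{U}\mathbf{V}\mathbf{A}^{-1}.
\end{equation*}
This cancels the second term and leaves $\mathbf{I}_n$. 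The computation for $\mathbf{X}\mathbf{B}$ is the mirror image: the factor $\mathbf{A}^{-1}\mathbf{U}\mathbf{C}^{-1}$ is pulled out on the left and $\mathbf{V}$ on the right.

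The only step that needs care is this regrouping. One has to recognize that $\mathbf{U}$ factors out on the left and $\mathbf{C}^{-1}\mathbf{V}\mathbf{A}^{-1}$ on the right, so that exactly $\mathbf{C}$ appears in between. Everything else is bookkeeping of matrix dimensions: $\mathbf{U}$ is $n\times k$, $\mathbf{C}$ is $k\times k$, and $\mathbf{V}$ is $k\times n$, so all products are well defined.

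In the application of Section~\ref{ssec:efficient-metric-inverse}, we take $\mathbf{A} = \mathbf{D}^{(\alpha)}$, $\mathbf{U} = {\mathbf{L}^{(\alpha)}}^\top$ and $\mathbf{V} = \mathbf{L}^{(\alpha)}$. There $\mathbf{C}$ is symmetric positive definite, hence invertible, so the hypothesis of the lemma holds automatically.
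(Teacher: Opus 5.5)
Your direct verification is correct and complete. The paper states this lemma as the standard Woodbury identity without giving a proof, and checking $\mathbf{B}\mathbf{X}=\mathbf{I}_n$ by regrouping the two cross terms into $-\mathbf{U}\mathbf{C}\mathbf{C}^{-1}\mathbf{V}\mathbf{A}^{-1}$ is exactly the standard argument that fills that omission. Your closing remark is also right, with one small proviso: $\mathbf{C}\succeq\mathbf{I}_{n_o}$ relies on $\mathbf{D}^{(\alpha)}$ having strictly positive diagonal entries, which the paper implicitly assumes for the mass matrix.
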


For an arbitrary metric $\mathbf{M}_o(y)$ on the output space, we can indeed use the Cholesky decomposition $\mathbf{M}_o = \mathbf{L}^\top_o \mathbf{L}_o$, where $\mathbf{L}_o$ is upper triangular. The pullback computation then becomes more tractable.

In our case we are interested in the inverse of the metric in each layer. We will assume that it is the sum of a diagonal mass matrix $\mathbf{D}^{(\alpha)}$ and the pullback metric from the output layer. Denoting the Jacobian of the network output with respect to the weights of the current layer as $\mathbf{J}^{(\alpha)}$, we can write the pullback metric as
\begin{equation*}
\mathbf{M}^{(\alpha)}_\mathrm{pb} = {\mathbf{J}^{(\alpha)}}^\top \mathbf{L}^\top_o \mathbf{L}_\mathrm{o} {\mathbf{J}^{(\alpha)}}
\end{equation*}
This matrix will have dimension $n_\alpha \times n_\alpha$ where $n_\alpha$ is the number of parameters in layer $\alpha$. However, the matrix $\mathbf{L}_\mathrm{o} {\mathbf{J}^{(\alpha)}}$ is of dimension $n_o \times n_\alpha$, which can be much smaller if $n_o \ll n_\alpha$. Now we can apply the Woodbury matrix identity to the sum
\begin{equation*}
{\mathbf{M}^{(\alpha)}} = \mathbf{D}^{(\alpha)} + \mathbf{M}^{(\alpha)}_\mathrm{pb} = \mathbf{D}^{(\alpha)} + {\mathbf{J}^{(\alpha)}}^\top \mathbf{L}^\top_o \mathbf{L}_\mathrm{o} {\mathbf{J}^{(\alpha)}}
\end{equation*}
and we get
\begin{equation}
\label{eq:woodbury}
{\mathbf{M}^{(\alpha)}}^{-1} =  {\mathbf{D}^{(\alpha)}}^{-1} - {\mathbf{D}^{(\alpha)}}^{-1} {\mathbf{J}^{(\alpha)}}^\top \mathbf{L}^\top_o \left(\mathbf{I}_{n_o} + \mathbf{L}_\mathrm{o} {\mathbf{J}^{(\alpha)} {\mathbf{D}^{(\alpha)}}^{-1}} {\mathbf{J}^{(\alpha)}}^\top \mathbf{L}^\top_o\right)^{-1} \mathbf{L}_\mathrm{o} {\mathbf{J}^{(\alpha)}}{\mathbf{D}^{(\alpha)}}^{-1}
\end{equation}
In itself, we haven't gained anything, since this matrix will still have $n_\alpha \times n_\alpha$ components. However, we are only interested in right-multiplying it with a vector. The standard gradient is $\nabla_{w^{(\alpha)}} l = {\mathbf{J}^{(\alpha)}}^\top \partial_y l_o$. This yields the weight update equation
\begin{equation*}
\dot{w}^{(\alpha)} = -{\mathbf{M}^{(\alpha)}}^{-1} \nabla_{w^{(\alpha)}} l
\end{equation*}
so that using \eqref{eq:woodbury}, the first term recovers gradient flow scaled by the diagonal metric, while the second term provides the dense geometric correction:
\begin{equation}
\label{eq:gradient-descent-update}
\dot{w}^{(\alpha)} = - {\mathbf{D}^{(\alpha)}}^{-1} \nabla_{w^{(\alpha)}} l + {\mathbf{D}^{(\alpha)}}^{-1} {\mathbf{J}^{(\alpha)}}^\top \mathbf{L}^\top_o \left(\mathbf{I}_{n_o} + \mathbf{L}_\mathrm{o} {\mathbf{J}^{(\alpha)} {\mathbf{D}^{(\alpha)}}^{-1}} {\mathbf{J}^{(\alpha)}}^\top \mathbf{L}^\top_o\right)^{-1} \mathbf{L}_\mathrm{o} {\mathbf{J}^{(\alpha)}}{\mathbf{D}^{(\alpha)}}^{-1} \nabla_{w^{(\alpha)}} l
\end{equation}

\subsection{Algorithm and Complexity Analysis}
\label{ssec:algorithm}

Algorithm~\ref{alg:riemannian-sgd} presents the complete Riemannian SGD procedure with layerwise metrics. The key insight is that we can compute the Riemannian gradient updates efficiently by leveraging the Woodbury identity to avoid explicit inversion of the full layerwise metrics. Note that forming the inner matrix $\mathbf{S}^{(\alpha)}$ restricts all explicit matrix inversions strictly to the dimension of the output space ($n_o \times n_o$).

\begin{algorithm}[h]
\caption{Riemannian SGD with Layerwise Metrics}
\label{alg:riemannian-sgd}
\begin{algorithmic}[1]
\State \textbf{Input:} Neural network $f$, loss function $\ell$, learning rate $\eta$, batch $\{(x_i, y_i)\}$
\State \textbf{Input:} Output metric $\mathbf{M}(y)$, layer mass matrices $\{\mathbf{D}^{(\alpha)}\}$
\State \textbf{Forward pass:} Evaluate network output $y = f(x; w)$
\State \textbf{Initialize:} Compute Cholesky factor $\mathbf{L}_o = \text{chol}(\mathbf{M}(y))$
\For{each layer $\alpha = L, L-1, \ldots, 1$}
    \State Compute global Jacobian $\mathbf{J}^{(\alpha)} = \frac{\partial y}{\partial w^{(\alpha)}}$
    \State Standard gradient: $g^{(\alpha)} = {\mathbf{J}^{(\alpha)}}^\top \frac{\partial \ell}{\partial y}$
    \State Preconditioned gradient: $\mathbf{A}^{(\alpha)} = {\mathbf{D}^{(\alpha)}}^{-1} g^{(\alpha)}$
    \State Scaled Jacobian: $\mathbf{K}^{(\alpha)} = \mathbf{L}_o \mathbf{J}^{(\alpha)}$ \Comment{Dimensions $n_o \times n_\alpha$}
    \State Inner matrix: $\mathbf{S}^{(\alpha)} = \mathbf{I}_{n_o} + \mathbf{K}^{(\alpha)} {\mathbf{D}^{(\alpha)}}^{-1} {\mathbf{K}^{(\alpha)}}^\top$ \Comment{Dimensions $n_o \times n_o$}
    \State Solve system: $\mathbf{v}^{(\alpha)} = (\mathbf{S}^{(\alpha)})^{-1} \mathbf{K}^{(\alpha)} \mathbf{A}^{(\alpha)}$
    \State Update: $\Delta w^{(\alpha)} = -\eta \left(\mathbf{A}^{(\alpha)} - {\mathbf{D}^{(\alpha)}}^{-1} {\mathbf{K}^{(\alpha)}}^\top \mathbf{v}^{(\alpha)}\right)$
    \State $w^{(\alpha)} \leftarrow w^{(\alpha)} + \Delta w^{(\alpha)}$
\EndFor
\end{algorithmic}
\end{algorithm}

\paragraph{Computational Complexity.} The complexity analysis reveals significant efficiency gains over naive approaches:
\begin{itemize}
\item \textbf{Naive metric inversion:} $O(n^3)$ per layer where $n$ is the number of parameters
\item \textbf{Our Woodbury approach:} $O(n \cdot d^2 + d^3)$ per layer where $d = n_o$ is the output dimension
\item \textbf{Memory requirements:} $O(n \cdot d)$ instead of $O(n^2)$ for storing metric information
\end{itemize}

For typical neural networks where $d \ll n$ (e.g., $d = 10$ for CIFAR-10, $d = 1000$ for ImageNet), this represents a substantial computational saving. The total cost per iteration is $O(\sum_\alpha n_\alpha d^2)$ compared to $O(\sum_\alpha n_\alpha^3)$ for full metric computation.

\subsection{Nonlinear Contraction Properties}

Following \cite{kozachkov2023generalization, wensing2020beyond} we can analyze nonlinear contraction properties of the dynamics. In particular we will extend the analysis of section 4.3 in \cite{kozachkov2023generalization} to investigate algorithmic stability. That is we compare training dynamics on a dataset $\mathcal{S}$ and $\mathcal{S}'$, which differs from $\mathcal{S}$ by replacing one sample at index $k$. For ease of comparison we will adopt most of the notation from \cite{kozachkov2023generalization}. Consider a Riemannian module $F_\theta$, parametrized by weights $\theta \in \mathbf{R}^m$ with metric $\mathbf{G}$. For an input vector $\mathbf{x}_i \in \mathbf{R}^k$ the module has a $p$-dimensional output $\mathbf{y}_i \in \mathbf{R}^p$. Following \cite{kozachkov2023generalization} we will denote by $\bar{\mathbf{y}} = [\mathbf{y}_1 \ldots  \mathbf{y}_n]^\top \in \mathbf{R}^{np}$. We will assume that $F_\theta$ is Lipschitz w.r.t to the weights $\theta$ with Lipschitz constant $\kappa$ and that the Jacobian $\nabla_\theta \bar{y}$ (defined in denominator layout as $m \times np$) has full column rank. In that case, since $\mathbf{G}$ is positive definite:
\begin{equation}
\mathbf{H} \ = \ \frac{1}{n} \nabla_\theta^\top \bar{y} \ \mathbf{G}^{-1} \ \nabla_\theta \bar{y}  \ \succeq \  \xi \mathbf{I}
\end{equation}
with $\xi > 0$. This generalized Neural Tangent Kernel (NTK) matrix is the key assumption that enters the following theorem.

\begin{theorem}Using gradient flow with respect to the regularised pullback metric and a mean-squared loss, the Riemannian module $F_\theta$ is algorithmically stable with rate
\begin{equation}
\epsilon_\mathrm{stab} \sim O\left(\frac{\kappa^2 L}{\xi \sqrt{n} \mu}\right)
\end{equation}
where $L$ is the Lipschitz constant of the loss, and $\mu$ is the minimum eigenvalue of the diagonal mass metric $\mathbf{D}$.
\end{theorem}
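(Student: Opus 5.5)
We follow the proof of algorithmic stability in Section 4.3 of \cite{kozachkov2023generalization}. The idea is to lift the weight dynamics to output space, use the generalized NTK bound $\mathbf{H}\succeq\xi\mathbf{I}$ to show that the lifted dynamics are contracting, and then apply a robustness result for contracting systems. Here the dynamics are perturbed by swapping one sample.

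\emph{Step 1: lift to output space.} With mean-squared loss $\frac{1}{2n}\|\bar{\mathbf{y}}-\bar{\mathbf{y}}^\ast\|^2$, the metric gradient flow is
\begin{equation*}
\dot\theta=-\tfrac{1}{n}\mathbf{G}^{-1}\nabla_\theta\bar y\,(\bar{\mathbf{y}}-\bar{\mathbf{y}}^\ast).
\end{equation*}
By the chain rule,
\begin{equation*}
\dot{\bar{\mathbf{y}}}=\nabla_\theta^\top\bar y\,\dot\theta=-\mathbf{H}(\bar{\mathbf{y}}-\bar{\mathbf{y}}^\ast).
\end{equation*}
Since $\mathbf{H}\succeq\xi\mathbf{I}$, these output dynamics contract in the identity metric at rate $\xi$. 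Consequently $\|\bar{\mathbf{y}}-\bar{\mathbf{y}}^\ast\|$ decays like $e^{-\xi s}$, and so does $\|\dot{\bar{\mathbf{y}}}\|$.

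\emph{Step 2: bound the weight velocity.} We have
\begin{equation*}
\|\dot\theta\|\le\tfrac{1}{n}\|\mathbf{G}^{-1}\|\,\|\nabla_\theta\bar y\|\,\|\bar{\mathbf{y}}-\bar{\mathbf{y}}^\ast\|.
\end{equation*}
Because $\mathbf{G}=\mathbf{J}^\top\mathbf{M}\mathbf{J}+\mathbf{D}\succeq\mu\mathbf{I}$, we get $\|\mathbf{G}^{-1}\|\le 1/\mu$. This is the only place the regularised pullback metric enters, and it is why $\mu$ appears in the rate. The Lipschitz assumption gives $\|\nabla_\theta\bar y\|\le\kappa\sqrt n$, since the matrix stacks $n$ blocks each of norm at most $\kappa$. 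Integrating the exponential decay from Step 1 then bounds the total path length of the weights by roughly $\frac{\kappa\sqrt n}{n\mu\xi}\|\bar{\mathbf{y}}(0)-\bar{\mathbf{y}}^\ast\|$.

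\emph{Step 3: compare the trajectories on $\mathcal S$ and $\mathcal S'$.} Write the $\mathcal S'$ dynamics as the $\mathcal S$ dynamics plus a perturbation $d(s)$. The perturbation comes only from the $k$-th residual, so $\|d\|\lesssim\frac{1}{n}\cdot\frac{\kappa}{\mu}\cdot(\text{residual size})$. Contraction at rate $\xi$ in output space then bounds the deviation of the two trajectories by $\sup\|d\|/\xi$, by the standard robustness lemma for contracting systems. Mapping back to weights uses Step 2. Finally, the loss at a test point changes by at most $L\kappa\,\|\theta_{\mathcal S}-\theta_{\mathcal S'}\|$.

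Collecting the factors gives $\kappa^2L/(\xi\mu)$ times a power of $n$. The \textbf{main obstacle} is getting exactly the $1/\sqrt n$ scaling. Step 3 involves a norm on $\mathbf{R}^{np}$, where the perturbation lives in one $p$-block of size $O(1)$ while the normalization carries $1/n$. Combining this with the factor $\sqrt n$ from $\|\nabla_\theta\bar y\|$ produces $\sqrt n/n=1/\sqrt n$. I would handle this step exactly as in \cite{kozachkov2023generalization}: treat the residual bound and the nonconstant $\mathbf{H}(s)$ at the level of $O(\cdot)$, assume bounded residuals, and work in the identity metric where contraction holds uniformly. I am least sure that contraction survives the state dependence of $\mathbf{H}$. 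To address that, I would first try a virtual-displacement argument on $\delta\bar{\mathbf{y}}$ under the standing assumption that $\mathbf{H}(\theta)\succeq\xi\mathbf{I}$ along both trajectories.
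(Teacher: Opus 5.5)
Your Step 1 and the bound $\|\mathbf{G}^{-1}\|_2\le 1/\mu$ match the paper. Step 3, however, does not close, and it fails at the return to weight space.

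\textbf{Where Step 3 fails.} Since $\nabla_\theta\bar y$ is $m\times np$ with full column rank, $m\ge np$, so the map $\theta\mapsto\bar{\mathbf y}$ has large fibres. A bound on the output deviation therefore gives no bound on $\|\theta_{\mathcal S}-\theta_{\mathcal S'}\|$.

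Step 2 cannot supply such a bound either. It controls the path length of a single trajectory. Because $\|\bar{\mathbf y}(0)-\bar{\mathbf y}^\ast\|$ is of order $\sqrt n$, that path length is $O(\kappa/(\mu\xi))$ times a residual, with no decay in $n$. The triangle inequality then gives no stability at all.

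Even if you had a weight bound, the final factor $L\kappa$ would add powers of $\kappa$ and $L$ beyond $\kappa^2L$. So your bookkeeping cannot land on the stated rate.

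There is also a dimensional slip. Your estimate $\|d\|\lesssim\frac1n\frac\kappa\mu(\text{residual})$ is the size of the perturbation of $\dot\theta$, yet you feed it straight into the output-space contraction estimate. Relatedly, the perturbation does not live in one $p$-block: swapping sample $k$ changes $\dot\theta$, which moves every $\mathbf y_i$. If it really were confined to one block, you would get a $1/n$ rate rather than $1/\sqrt n$.

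\textbf{What the paper does instead.} The paper never leaves output space. What it proves, and identifies with $\epsilon_{\mathrm{stab}}$, is $\sup_i|\mathbf y_i-\mathbf y_i'|\le 2\kappa^2L/(\xi\sqrt n\mu)$ after transients.

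It tracks $\mathbf y_i=F(x_i)$ under both weight trajectories. It writes the $\mathcal S'$ dynamics as the $\mathcal S$ dynamics plus a disturbance $\mathbf d_i(t)$ built from the two $k$-th summands, each of the form $\frac1n\nabla_{\theta'}F^\top\mathbf G^{-1}\nabla_{\theta'}l$. Each component is bounded by $2\kappa^2L/(n\mu)$:
one $\kappa$ comes from $\nabla_{\theta'}F^\top$;
one $\kappa$ and the $L$ come from $\|\nabla_\theta l\|\le\kappa L$, so $L$ bounds the residual rather than entering at the end;
$1/\mu$ comes from $\mathbf G^{-1}$;
the $2$ accounts for the two terms.

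Since every component is perturbed, $\|\bar{\mathbf d}\|_2\le\sqrt n\max_i|\mathbf d_i|=2\kappa^2L/(\sqrt n\mu)$. Contraction of $\dot{\bar{\mathbf y}}=-\mathbf H(t)(\bar{\mathbf y}-\bar{\mathbf y}_d)$ at rate $\xi$ then divides by $\xi$.

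\textbf{How to fix your argument.} Your ingredients already contain this. Push your weight-space perturbation through $\nabla_\theta^\top\bar y$, whose norm $\kappa\sqrt n$ supplies exactly the $\sqrt n/n$, and then stop. Do not feed it into a path-length bound or map back to weights.

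Your worry about the state dependence of $\mathbf H$ is fair, but the paper does not resolve it either. It simply treats $\mathbf H(t)\succeq\xi\mathbf I$ as a uniform bound on a time-varying matrix.
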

\begin{proof}
The proof mirrors \cite{kozachkov2023generalization} Theorem 8, except that the gradient with respect to the regularised pullback metric enters. We can consider the change of the output $\mathbf{y}_i$ under the Riemannian gradient descent
\begin{align}
\frac{d}{dt} \mathbf{y}_i &= \nabla_{\theta} F(x_i)^\top \frac{d}{dt} \theta = - \frac{1}{n}  \sum_{j=1}^{n}  \nabla_{\theta} F(x_i)^\top \mathbf{G}^{-1} \nabla_{\theta} l(\mathbf{y}_j, \hat{\mathbf{y}}_j) 
\end{align}
Following Theorem 4 in \cite{kozachkov2023generalization} we can compare this to the dynamics with one sample at index $k$ exchanged
\begin{align}
\frac{d}{dt} \mathbf{y}'_i &= \nabla_{\theta'} F(x_i)^\top \frac{d}{dt} \theta' = - \frac{1}{n} \sum_{j=1}^{n}  \nabla_{\theta'} F(x_i)^\top  \mathbf{G}^{-1} \nabla_{\theta'} l(\mathbf{y}'_j, \hat{\mathbf{y}}_j) + \mathbf{d}_i(t)
\end{align}
with the disturbance given by
\begin{equation}
\mathbf{d}_i(t) = \frac{1}{n} \left[ \nabla_{\theta'} F(x'_k)^\top  \mathbf{G}^{-1} \nabla_{\theta'} l(\mathbf{y}'_k, \hat{\mathbf{y}}'_k) - \nabla_{\theta'} F(x_k)^\top \mathbf{G}^{-1} \nabla_{\theta'} l(\mathbf{y}_k, \hat{\mathbf{y}}_k)\right] 
\end{equation}
with $k$ being the index of the replaced element in $\mathcal{S}'$. Because the metric is defined as $\mathbf{G} = \mathbf{D} + \mathbf{J}^\top \mathbf{M}_o \mathbf{J}$, and the pullback term is positive semi-definite, the eigenvalues of $\mathbf{G}$ are strictly bounded from below by the minimal mass value $\mu$ of the diagonal matrix $\mathbf{D}$. Therefore, $\|\mathbf{G}^{-1}\|_2 \leq \frac{1}{\mu}$. Along with the Lipschitz constant $\kappa$ of the Jacobians, and $L$ of the loss function w.r.t to the network output (meaning $\|\nabla_{\theta} l\| \leq \kappa L$), the norm of the individual disturbance can be upper-bounded as:
\begin{equation}
\lvert \mathbf{d}_i(t) \rvert \leq \frac{2 \kappa^2 L}{n \mu} 
\end{equation}

The time evolution of the stacked vector $\bar{\mathbf{y}}$ is then given as
\begin{equation}
\dot{\bar{\mathbf{y}}} = -\mathbf{H}(t) (\bar{y} - \bar{y}_d)
\end{equation}
with
\begin{equation}
\mathbf{H}(t) = \frac{1}{n} \nabla_{\theta}^\top \bar{y} \mathbf{G}^{-1} \nabla_{\theta} \bar{y} \succeq 0
\end{equation}
where by abuse of notation $\mathbf{G}^{-1}$ also denotes the block diagonal extension to the full parameter space. By construction $\mathbf{G}^{-1}$ is positive definite, therefore $\mathbf{H}(t)$ will be strictly positive definite iff $\nabla_{\theta} \bar{y}$ has full column rank. Then we have for some strictly positive constant $\xi > 0$:
\begin{equation}
\mathbf{H} \succeq \xi \mathbf{I}
\end{equation}
We can bound the stacked disturbance:
\begin{equation}
\lvert \bar{\mathbf{d}} \rvert_2 \leq \sqrt{n} \max_i \lvert \mathbf{d}_i(t) \rvert \leq \frac{2 \kappa^2 L }{\sqrt{n} \mu} 
\end{equation}
We therefore have that after exponential transients of rate $\xi$:
\begin{equation}
\sup_i \lvert \mathbf{y}_i - \mathbf{y}'_i\rvert \leq \frac{2 \kappa^2 L }{\xi \sqrt{n} \mu} 
\end{equation}
\end{proof}

\section{Related Work}
There is a long history of considering Riemannian metrics in the context of neural networks and optimization. Here we focus on work which directly intersects with our approach and position our contributions relative to recent advances.

\paragraph{Singular Geometry and Neural Networks} Semi-Riemannian pullback metrics were introduced in a series of papers \cite{benfenati2022singular, benfenati2023singular, benfenati2024singular}. Here we combine such pullback metrics, which typically are only positive semi-definite, with metrics defined layerwise, yielding positive definite layerwise metrics. This allows us to remain in the realm of conventional Riemannian geometry, and also to use the metric to compute a gradient direction. Earlier work recognized that neural networks are ``singular learning machines'' \cite{watanabe2009algebraic}, motivating the need for geometric approaches that can handle singular spaces.

\paragraph{Action Principles for Gradient Descent} The action principle we employ, which frames gradient descent as the critical point of a Lagrangian, has roots in theoretical physics. Specifically, it corresponds to the bosonic part of the Euclidean supersymmetric sigma model introduced by Witten \cite{witten1982supersymmetry}. Our contribution extends this idea to modular systems, leveraging the layered structure of neural networks to derive backpropagation as a constrained optimization problem. This is distinct from the Bregman-Lagrangian approach, which provides variational formulations of accelerated optimization methods \cite{wibisono2016variational}, and from recent work on Hamiltonian neural networks that preserves physical structure in the learned dynamics.

\paragraph{Natural Gradients and Approximations} The natural gradient, introduced by Amari \cite{amari1998natural}, accounts for the geometry of the parameter space by using the Fisher information matrix as a Riemannian metric. While effective, computing the Fisher matrix scales as $O(p^2)$ where $p$ is the number of parameters, leading to approximations like K-FAC \cite{martens2015optimizing}, PRONG \cite{desjardins2015natural}, and more recent approaches like Shampoo. Unlike these global methods, our layerwise Riemannian metric is recursively defined and computed, exploiting the modular structure of neural networks to achieve better scaling properties. Recent work on neural tangent kernels provides related geometric insights, but focuses on the infinite-width limit rather than finite networks.

\paragraph{Modular and Compositional Optimization} Our Riemannian module framework relates to recent work on compositional optimization and modular architectures. The concept of modules that can be composed sequentially or in parallel resonates with work on neural module networks and compositional generalization. However, our focus is on the geometric structure of the optimization process rather than architectural modularity. This connects to recent advances in understanding optimization landscapes of deep networks, including work on loss surface geometry and mode connectivity.

\paragraph{Nonlinear Contraction Theory and Optimization} Nonlinear contraction theory \cite{lohmiller1998contraction} provides a framework for analyzing the stability of nonlinear dynamical systems. Its application to neural network optimization has been explored in works like \cite{kozachkov2023generalization, wensing2020beyond}, which analyze the stability and convergence of gradient-based training. Our work extends these results by applying contraction theory to the dynamics induced by our layerwise Riemannian metric, providing algorithmic stability guarantees. This complements recent theoretical advances in understanding generalization through the lens of algorithmic stability and uniform convergence.

\paragraph{Efficient Second-Order Methods} Our Woodbury-based approach for efficient metric inversion relates to the broader literature on second-order optimization methods. Recent work has focused on making Newton-type methods practical for deep learning through various approximations and computational tricks. Our contribution lies in providing a principled geometric framework that naturally leads to efficient computational strategies, rather than starting from computational considerations and approximating the geometry. 

\section{Limitations}

\paragraph{Computational and Memory Overhead.} While the Woodbury identity significantly reduces the computational cost from $O(n^3)$ to $O(n \cdot d^2 + d^3)$ per layer, our approach still incurs overhead compared to standard SGD. The method requires: (1) computing and storing Jacobians $\mathbf{J}^{(\alpha)}$ for each layer, (2) maintaining Cholesky factors $\mathbf{L}_o$ of the output metric, and (3) solving the inner linear systems in equation~\eqref{eq:gradient-descent-update}. For networks where the output dimension $d$ is not small compared to parameter counts $n_\alpha$, this overhead may become prohibitive.

\paragraph{Choice of Output Space Metric.} Our approach requires specifying a metric $\mathbf{M}(y)$ on the output space. While we explored identity and Hessian-based choices, the optimal selection remains problem-dependent and may require domain expertise. The Gauss-Newton approximation to the Hessian, while computationally tractable, may not capture the full curvature information of the loss landscape.

\paragraph{Theoretical Scope.} For simplicity, we derived our results for neural networks where activations are defined in vector spaces. A more general treatment for arbitrary manifolds (e.g., networks operating on graphs or other structured spaces) would require extending our framework. Additionally, our contraction analysis assumes certain regularity conditions (Lipschitz continuity, full row Jacobians) that may not hold throughout training.

\paragraph{Limited Empirical Validation.} Our experiments are limited to image classification on MNIST and CIFAR-10. The benefits of our geometric approach may not generalize to other domains (NLP, reinforcement learning, generative modeling) or to tasks where the inductive biases captured by our metrics are less relevant.

\paragraph{Hyperparameter Sensitivity.} The method introduces additional hyperparameters (diagonal mass values $\mathbf{D}^{(\alpha)}$, regularization parameters) that require tuning. The sensitivity to these choices and their interaction with standard hyperparameters (learning rate, batch size) needs further investigation.

\section{Discussion}
Our work provides a novel perspective on optimizing modular systems, such as neural networks. By reframing backpropagation as the critical point of an action on a Riemannian manifold, we recover the algorithm in a principled manner and introduce a layerwise Riemannian metric that respects the modular structure of such systems. This approach opens new avenues for designing optimization algorithms that use the intrinsic geometry of modular systems.

The recursively defined layerwise Riemannian metric is a key contribution of this work. Unlike global metrics, such as the Fisher information metric used in natural gradient descent, our metric is computed layer by layer, making it computationally efficient and naturally aligned with the modular nature of neural networks. The use of the Woodbury matrix identity to invert this metric further enhances its practicality.

The concept of Riemannian modules introduced here provides a framework for analyzing and optimizing modular systems beyond neural networks. By defining modules with input, output, and parameter manifolds equipped with Riemannian metrics, we enable the composition of such modules in sequential or parallel configurations. This modularity mirrors biological systems, where evolution and development optimize interconnected components, and engineering systems, where modular design is a cornerstone of scalability. The ability to quantify convergence properties using nonlinear contraction theory further strengthens the applicability of our framework. We partially validated this here by deriving guarantees on algorithmic stability.

Beyond machine learning, applying this framework to biological or engineering systems—where modularity and optimization over time are prevalent—offers exciting opportunities. For instance, modeling developmental processes in biology or optimizing modular engineered systems could benefit from the principles developed here.

In conclusion: By grounding backpropagation in a variational principle and introducing a layerwise Riemannian metric, we provide both a deeper understanding of existing methods and a foundation for future advances in understanding optimization in modular systems.

\bibliographystyle{abbrv}
\bibliography{bibliography}

@misc{benfenati2022singular,
      title={A singular Riemannian geometry approach to Deep Neural Networks I. Theoretical foundations},
      author={Alessandro Benfenati and Alessio Marta},
      year={2022},
      eprint={2201.09656},
      archivePrefix={arXiv},
      primaryClass={cs.LG}
}

@article{benfenati2024singular,
  title={A singular Riemannian Geometry Approach to Deep Neural Networks III. Piecewise Differentiable Layers and Random Walks on $ n $-dimensional Classes},
  author={Benfenati, Alessandro and Marta, Alessio},
  journal={arXiv preprint arXiv:2404.06104},
  year={2024}
}

@article{benfenati2023singular,
  title={A singular Riemannian geometry approach to deep neural networks II. Reconstruction of 1-D equivalence classes},
  author={Benfenati, Alessandro and Marta, Alessio},
  journal={Neural Networks},
  volume={158},
  pages={344--358},
  year={2023},
  publisher={Elsevier}
}

@article{wensing2020beyond,
  title={Beyond convexity—contraction and global convergence of gradient descent},
  author={Wensing, Patrick M and Slotine, Jean-Jacques},
  journal={Plos one},
  volume={15},
  number={8},
  pages={e0236661},
  year={2020},
  publisher={Public Library of Science San Francisco, CA USA}
}

@article{
kozachkov2023generalization,
title={Generalization as Dynamical Robustness--The Role of Riemannian Contraction in Supervised Learning},
author={Leo Kozachkov and Patrick Wensing and Jean-Jacques Slotine},
journal={Transactions on Machine Learning Research},
issn={2835-8856},
year={2023},
url={https://openreview.net/forum?id=Sb6p5mcefw},
note={}
}

@book{watanabe2009algebraic,
  title={Algebraic geometry and statistical learning theory},
  author={Watanabe, Sumio},
  volume={25},
  year={2009},
  publisher={Cambridge university press}
}

@article{amari1998natural,
  title={Natural gradient works efficiently in learning},
  author={Amari, Shun-Ichi},
  journal={Neural computation},
  volume={10},
  number={2},
  pages={251--276},
  year={1998},
  publisher={MIT Press}
}

@article{lohmiller1998contraction,
  title={On contraction analysis for non-linear systems},
  author={Lohmiller, Winfried and Slotine, Jean-Jacques E},
  journal={Automatica},
  volume={34},
  number={6},
  pages={683--696},
  year={1998},
  publisher={Elsevier}
}

@article{wibisono2016variational,
  title={A variational perspective on accelerated methods in optimization},
  author={Wibisono, Andre and Wilson, Ashia C and Jordan, Michael I},
  journal={proceedings of the National Academy of Sciences},
  volume={113},
  number={47},
  pages={E7351--E7358},
  year={2016},
  publisher={National Academy of Sciences}
}

@article{desjardins2015natural,
  title={Natural neural networks},
  author={Desjardins, Guillaume and Simonyan, Karen and Pascanu, Razvan and others},
  journal={Advances in neural information processing systems},
  volume={28},
  year={2015}
}

@inproceedings{martens2015optimizing,
  title={Optimizing neural networks with kronecker-factored approximate curvature},
  author={Martens, James and Grosse, Roger},
  booktitle={International conference on machine learning},
  pages={2408--2417},
  year={2015},
  organization={PMLR}
}

@article{large2024scalable,
  title={Scalable optimization in the modular norm},
  author={Large, Tim and Liu, Yang and Huh, Jacob and Bahng, Hyojin and Isola, Phillip and Bernstein, Jeremy},
  journal={Advances in Neural Information Processing Systems},
  volume={37},
  pages={73501--73548},
  year={2024}
}

@incollection{lecun1988theoretical,
  title={A theoretical framework for back-propagation},
  author={Lecun, Yann},
  booktitle={Proceedings of the 1988 Connectionist Models Summer School, CMU, Pittsburg, PA},
  pages={21--28},
  year={1988},
  publisher={Morgan Kaufmann}
}

@article{witten1982supersymmetry,
  title={Supersymmetry and Morse theory},
  author={Witten, Edward},
  journal={Journal of differential geometry},
  volume={17},
  number={4},
  pages={661--692},
  year={1982},
  publisher={Lehigh University}
}

@article{bryson1969applied,
  title={Applied optimal control. 1969},
  author={Bryson, AE and Ho, Yu-Chi},
  journal={Blaisdell, Waltham, Mass},
  volume={8},
  number={72},
  pages={14},
  year={1969}
}

\end{document}